\theoremstyle{definition}
\newtheorem{theorem}{Theorem}[section]
\newcommand{\pr}{\text{Pr}}
\journal{Decision Support Systems}
\begin{document}

\begin{frontmatter}

\title{Discriminative Data-driven Self-adaptive Fraud Control Decision System with Incomplete Information}


\author[mysecondaryaddress]{Junxuan Li}
\ead{junxuan.li@gatech.edu}

\author[mysecondaryaddress]{Yung-wen Liu}
\ead{yungliu@microsoft.com}

\author[mysecondaryaddress]{Yuting Jia}
\ead{yutjia@microsoft.com}

\author[mysecondaryaddress]{Jay Nanduri}
\ead{jayna@microsoft.com}

\address[mysecondaryaddress]{Dynamic 365 Fraud Protection, Microsoft, Redmond, WA 98052}

\begin{abstract}
While E-commerce has been growing explosively and online shopping has become popular and even dominant in the present era, online transaction fraud control has drawn considerable attention in business practice and academic research. Conventional fraud control considers mainly the interactions of two major involved decision parties, i.e. merchants and fraudsters, to make fraud classification decisions without paying much attention to dynamic looping effect arose from the decisions made by other profit-related parties. This paper proposes a novel fraud control framework that can quantify interactive effects of decisions made by different parties and can adjust fraud control strategies using data analytics, artificial intelligence, and dynamic optimization techniques. Three control models, Naive, Myopic and Prospective Controls, were developed based on the availability of data attributes and levels of label maturity. The proposed models are purely data-driven and self-adaptive in a real-time manner. The field test on Microsoft real online transaction data suggested that new systems could sizably improve the company's profit.\\
\end{abstract}

\begin{keyword}
E-commerce, transaction fraud risk, optimal control, artificial intelligence, data-driven decision support,  incomplete information.\\
\end{keyword}

\end{frontmatter}

\section{Introduction}
As E-commerce has grown explosively in recent years, many merchants have been providing some centralized platforms for consumers to buy products with ''One-Click". Although online (card-not-present) type of transactions have offered the great benefit of consumer convenience, it also has increased the high risk of transaction frauds.  As a result, merchants unavoidably have to employ many resources to develop an effective and efficient mechanism for fraud detection and transaction risk control. These control systems usually consist of two core engines: a risk scoring engine and a risk control engine.

The risk scoring engine is designed to measure the risk level of each transaction. Instead of assigning a transaction with explicit 0-1 (legitimacy - fraud) classification, the majority of merchants calculate the risk score for each transaction based on its attributes, such as purchase price, order quantity, payment information, product market, etc. Whenever a transaction with a higher score is seen, it is more likely to be fraudulent. With the help of big data and machine learning technologies, the modern scoring model has been significantly improved using streaming historical data. 

The risk control engine gets involved once a risk score is calculated. Some transactions that violate predetermined policies or rules get instantly rejected. These predetermined rules and policies are set due to some governments and merchants made regulations, or they are needed when some obvious frauds require immediate blockade. However, the majority of frauds fail to be restrained by these rules, so the risk control engine needs to step in and further prevent more fraudulent transactions using the risk scores. Conventional risk controls apply static risk cut-off score thresholds: approve transactions with risk scores lower than the low score threshold; reject transactions with scores higher than the high score threshold; utilize human intelligence (manual review) for further investigations on transactions with the risk scores in-between. The cut-off score thresholds are set so that the inline fraud detection system can optimally prevent fraudsters' attacks. This threshold band method is widely applied in e-commerce merchants and financial institutions. Despite the fact that the method of risk score evaluation has been significantly improved during the past few years, due to the following three main reasons decisions made by risk scores are still not always reliable: 1) Rapid changes in fraudsters' behavior patterns; 2) Loss of fraud signals from rejected transactions, and; 3) Long data maturity lead time.  Because of these issues, the conventional fraud control engine lacks for flexibility and capability of real-time self-adjustment, and hence cannot always provide the most accurate risk decisions.

Our research motivation for this paper stemmed not only from the drawbacks of the current fraud control systems but also from the broader view of various risk control parties who contribute to the final decisions in different transaction flows. Merchants' risk control decision making should not be isolated from the entire decision environment, where payment issuing banks and manual review teams make follow-up decisions that constitute the final decisions on every transaction.  Figure \ref{fig:flow} depicts how a transaction is processed through different decision stations until it reaches its final decision.

\begin{figure}[hbtp]
\centering
\includegraphics[scale=0.5]{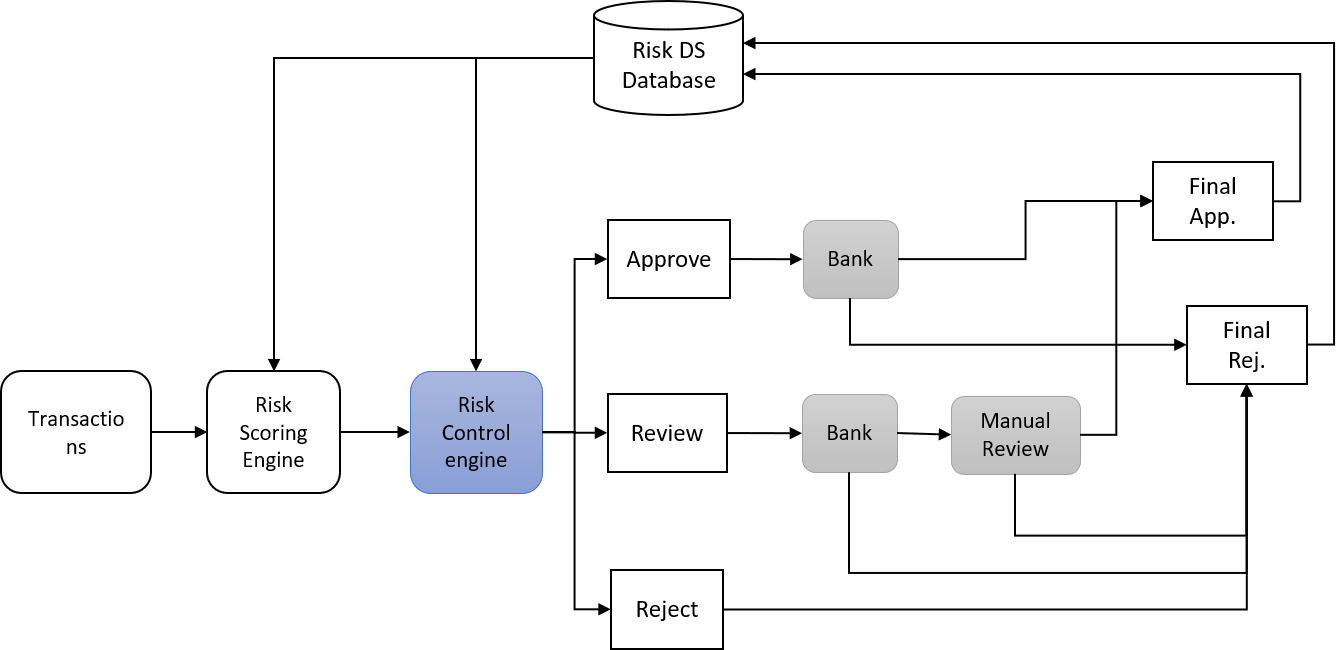}
\caption{Transaction flow demonstration}
\label{fig:flow}
\end{figure}

When a transaction arrives, the risk scoring engine calculates its risk level score based on all its associated features.The risk control engine then makes a decision (approval, rejection, MR review) using some important attributes of this transaction (including its risk score). If the transaction is approved by the risk control engine, it is then sent to the bank for the follow-up decision (a bank authorized transaction is marked as \textit{Final Approval}, and a bank declined transaction is marked as \textit{Final Rejection}).  If the transaction is rejected by the risk control engine, it is directly marked as {\it{Final Rejection}}. If the transaction is not approved nor rejected by the risk control engine, it would also be sent to the bank first.  Only if the bank authorizes the transaction, it has the chance to reach to the manual review (MR) agents for further investigation and for its final decision (a transaction that is authorized by bank and approved is marked as \textit{Final Approval}, and marked \textit{Final Rejection} otherwise).  The blue box indicates the target of this research, and the grey boxes point out other involved decision-making parties. 

Banks are regarded as a single decision party for simplicity. From the data, we found that when the risk control engine approved and submitted transactions that included more frauds (false negative: wrongful approval) to the banks, when banks sensed it, they became more conservative and would decree more rejections of good transactions (false positive: wrongful rejection).  Data also showed that when the risk control engine submitted transactions that included fewer frauds (true negative: rightful approval) to the manual review (MR) teams,  manual review teams tended to have much harder time to make accurate risk decisions since fraud patterns are less massive and recognizable. Interactions of different decision parties, legitimate customers and fraudsters are demonstrated in Figure \ref{fig:interaction}.

\begin{figure}[htbp]
\centering
\includegraphics[scale=0.42]{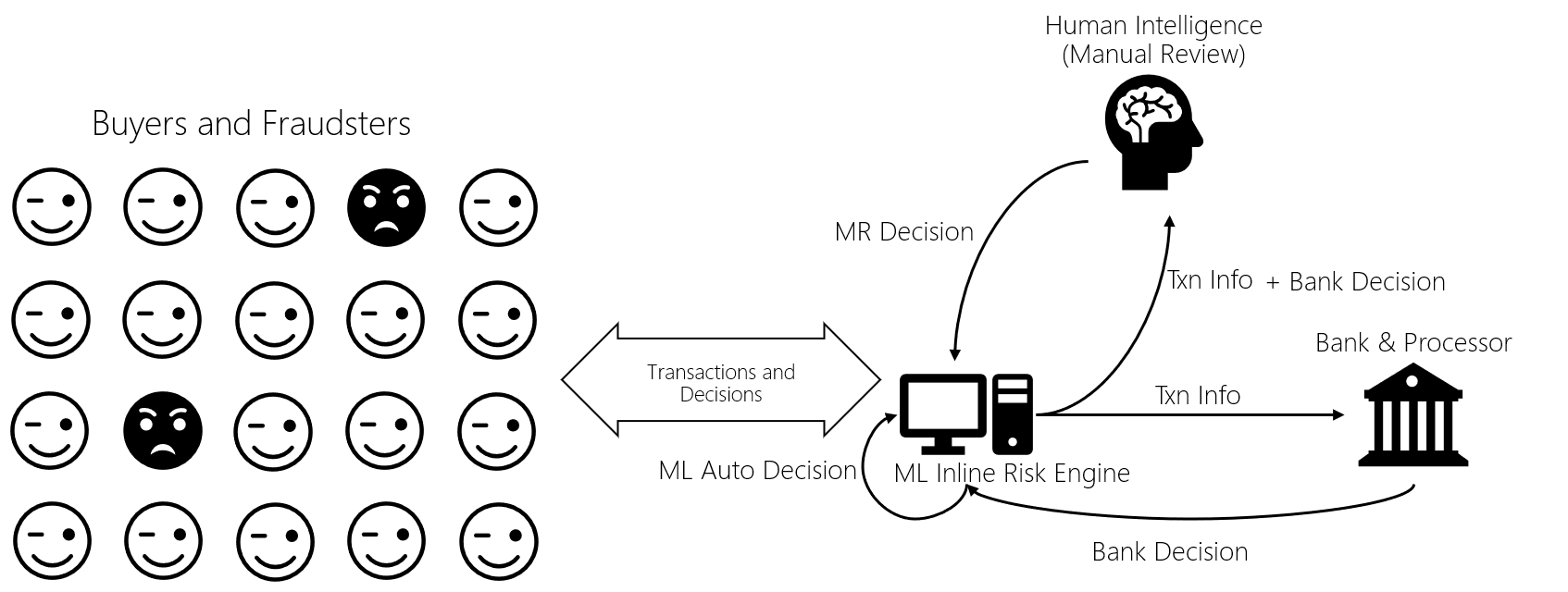}
\caption{Demonstration of interactions among decision parties and buyers/fraudsters}
\label{fig:interaction}
\end{figure}

Considering the high total dollar amount of e-commerce transactions taking place in this such rapidly changing risk decision environment, there is a strong need to design a fraud control engine that can conquer all the aforementioned challenges and optimize the decision accuracy so that the higher profit can be reached. In this paper, the proposed control framework is designed to achieve the following:
\begin{enumerate}[(i)]
\item Adaptive learning: the proposed risk control engine is trained using streaming transaction records which might include some incomplete information such as the immature label, and it can adaptively recognize the new decision environment;
\item Discriminative control: instead of using static uniform cut-off thresholds, the proposed control system can assign inline decision (Approve, Reject or Manual Review) in a real-time manner based on the attributes of each incoming transaction;
\item Data-driven: the risk control is entirely data-driven which helps avoid unreliable ad hoc human-made hard-coding rules on risk decisions.
\end{enumerate}
The field test on Microsoft real online transaction data suggested that the proposed control system could significantly improve the company's profit by reducing the loss caused by inaccurate decisions (including both wrongful approvals and wrongful rejections).

The rest of this paper is structured as follows:  In Section \ref{sec:review} previous research work related to fraud control is first outlined and the existence of the research gap is discussed. In Section \ref{sec:problem} the {\it{Perfect State Dynamic Model}} with rigorous mathematical formulation are introduced and the intractableness of the model is then discussed. Three approximate dynamic control models are proposed in Section \ref{sec:control}, and the test results of their performance are included in Section \ref{sec:test}. Section \ref{sec:conclusion} concludes this paper.

\section{Related Research}\label{sec:review}
Online shopping fraud detection research using machine learning methodologies started from early 90's right after the occurrence of E-commerce, in which the major research task was to evaluate fraud risk levels of transactions. Fraud risk level was measured using risk scores, and thus the research on risk scoring gained widespread attention. These scoring engines were inspired by neural network \cite{GhoshReilly1994FraudDetectionNN,Aleskerov1997FraudDetectionNN,Dorronsoro1997FraudDetectionNN}, decision tree \cite{mena2002FraudDetectionBook,Kirkos2007FraudDetectionDT,Sahin2013FraudDetectionDT}, random forest \cite{Bhattacharyya2011FraudDetectionRF}, network approach \cite{APATE2015FraudDetectionNW} and deep neural network \cite{Kang2016FraudDetectionCNN}. Readers who are interested in this topic may also refer to \cite{Review2011DSS} and references therein for other related papers that discussed different scoring methods. Despite the fact that current research admits the fact that fraud patterns keep changing and fraud risk scores are not always that reliable, no existing papers discuss how to optimally utilize these scores in fraud control operations. On the other hand, data mining papers provide weak guidance in detailed operations, as risk score is indeed a blur expression of fraud. There is currently no literature demonstrating how to deal with the transactions in "gray zone", where the risk score of a transaction is neither too low nor too high. Additionally, no literature has addressed interactions of decisions made by multiple parties for transaction risk control. The main reason of lack of related literature is that e-commerce data are strictly confidential and thus very limited access are granted for academic researches. Our this paper fills the gaps between the transaction fraud evaluation and the systematic risk control operations.

Dynamic control research started from the 1940's.  We suggest \cite{MDP1994} and \cite{DP1995} for comprehensive introduction to dynamic optimal control methods, as well as their applications in communication, inventory control, production planning, quality control, etc.. In this research, we investigated an important segment of the dynamic control research, dynamic optimal control with incomplete information, as the main technical foundation of our paper which targeted the challenge of some fraud control systems that can only obtain and utilize partially mature data for modeling. One previous related research is Partially Observed Markovian Decision Process (POMDP). POMDP is a sequential decision-making model that deals with inaccurate and incomplete observations of the system state or decision environment. It models/infers transition probability matrix, and the underlying relationship between partially observed and true states (fully observed) information. However, POMDP brings in the significant computational challenges and often requires carefully designed heuristic algorithms to achieve sub-optimal solutions. Structural properties of the reward function and computational algorithms of POMDP are available in \cite{POMDP1973Finite}, \cite{POMDP1978InfiniteDiscount}, \cite{POMDP1980Platzman}, \cite{POMDP1989SolutionProcedureWhite}, \cite{POMDP1991WhiteSurvey}, \cite{POMDP1994AAAI94}, \cite{POMDP1998HeuristicWhite}, \cite{POMDP2004HeuriticWhite} and \cite{POMDP2010Geometric}. Another related research in dynamic control with incomplete information is Adaptive Dynamic Programming (ADP). ADP assumes that perfect information is not known a priori and needs to be gradually learned from historical data or feedback signals of the dynamic system. ADP concepts started from 1970's and contributed as one of the core methods in reinforcement learning. In this paper, we will only highlight a number of papers addressing Actor-Critic structure, one branch of ADP research that is closer in respect to the depth and width of our research. Readers who are interested in ADP should refer to \cite{ADP2009Survey} for a comprehensive review of ADP with respect to theoretical developments as well as application studies. The Actor-Critic structure was first proposed in \cite{ACD1983Sutton}, which suggested an optimal control of learning while improving. Actor-Critic structure implies two steps: an actor applies an action to the environment and receives feedback from a critic; Action improvement is then guided by the evaluation signal feedback. The decision environment feedback is recognized and reinforced after receiving feedback rewards with neural network (\cite{ADPNN1989Werbos},\cite{NNcontrol1991MSW} and \cite{NeuroDP1996}), probabilistic models \cite{RL1998} for bandit, Monto Carlo and Approximate MDP methods) and other stochastic models \cite{SLO2009Cao}. There are two main challenges in solving ADP: (1) curse of dimensionality: as the dimensions of state space and action space get extremely high, a large amount of information must be stored and it makes the computational cost grows explosively \cite{ADPCoD2009}; (2) Implicit form of objective functions: reward/cost function in dynamic control does not have an explicit form, which needs to be carefully approximated \cite{OLADP2013}. Powell introduced several parametric approximation methods to mitigate curse of dimensionality in \cite{ADPCoD2009} . Powell et al. in \cite{OLADP2013} proposed general dynamic control heuristics in ADP, including myopic control and lookahead control with different approximation schemes for cost function and decision environment transition probabilities, while decision environment is learned using local searching, regression or Bayesian methods with either offline or online fashion.

Our research is motivated by the current research gap in risk management literatures. Problem formulations in Section \ref{sec:problem} is supported by POMDP literatures, and heuristic solution algorithms are inspired by the ideas in ADP literatures. We studied some realistic issues in fraud control domain, and adapted the general POMDP models and ADP heuristics to fit the structure of fraud control problem. The model and algorithms proposed in this paper are not limited to the application of transaction fraud control, and can be easily extended to other fraud control and defense applications in finance, healthcare, electrical system, robotics, and homeland security.

\section{Problem Formulation}\label{sec:problem}
In this section, we rigorously formulate the dynamic control model assuming that the state information and the state transition information in the dynamic control model can be exactly characterized. However, the state information and the state transition probabilities in perfect state model, called \eqref{mod:base} in Section \ref{sec:perfect-model}, are not explicit, which need to be approximated from incomplete streaming data. Section \ref{sec:intractable-issue} discusses challenges in solving the dynamic model.

\subsection{Perfect State Dynamic Model}\label{sec:perfect-model}
We first focus and investigate the expected profit in transaction level, which are the building blocks of the control system. Let $s$, $m$ and $c$ denote risk score, profit margin and costs (cost of goods, manual review costs, chargeback fine, etc.) respectively. $s$ has a finite integral support $[\bar{s}]=\{0,1,...,\bar{s}-1,\bar{s}\}$ with upper bound $\bar{s}$, and $m\in\mathbb{R}$, $c\in\mathbb{R}$ are real numbers. According to system logistics shown in Figure \ref{fig:flow}, profits of approval ($app$), review ($rev$) and rejection ($rej$) of this transaction $w=(s,m,c)$ can be formulated as follow:
\begin{align*}
R_{app}(w)=&\delta_{w( \text{ Bank Auth. } \cap \text{ Non-fraud})}\cdot m - \delta_{w(\text{ Bank Auth. } \cap \text{ Fraud})}\cdot c\\
R_{rev}(w)=&\delta_{w(\text{ Bank Auth. } \cap \text{ MR App. } \cap \text{ Non-fraud})}\cdot m\\
& - \delta_{w( \text{ Bank Auth. } \cap \text{ MR App. } \cap \text{ Fraud})}\cdot c - \delta_{w( \text{ Bank Auth.})}\cdot c_0\\
R_{rej}(w)=&0
\end{align*}
where $c_0$ is unit labor cost for each manual review, and $\delta_{(\cdot)}$ is the indicator function, i.e. given event $H$,
\begin{align*}
\delta_{(H)}=\left\{
\begin{array}{ll}
1 & \text{if $H$ is true};\\
0 & \text{if $H$ is false}.
\end{array}
\right.
\end{align*}
Given the fact that risk score is a comprehensive evaluation of the risk level, which is estimated using thousands of transaction attributes, we assume that for any two transactions that have the same risk score $s$, i.e. $w=(s,m,c)$ and $w'=(s,m',c')$, the interactive effect of bank or MR are identical, which can be expressed in the mathematical form as,
\begin{align}
\pr(H\mid w)=\pr(H\mid s)=\pr(H\mid w')\label{equ:score-link}
\end{align}
With Eq.\eqref{equ:score-link}, the expected profit for each risk operation for transaction $w$ can be derived as
\begin{subequations}
\begin{align}
\mathbb{E}[R_{app}(w)]=&\pr(\text{Bank Auth. } \cap \text{ Non-fraud} \mid s )\cdot m\notag\\ & - \pr(\text{Bank Auth. } \cap \text{ Fraud} \mid s)\cdot c \notag\\
=& g_1(s)\cdot m - g_2(s)\cdot c\label{equ:exp-prof-app}\\
\mathbb{E}[R_{rev}(w)]=&\pr(\text{Bank Auth. } \cap \text{ MR App. } \cap \text{ Non-fraud} \mid s)\cdot m \notag\\
& - \pr(\text{Bank Auth. } \cap \text{ MR App. } \cap \text{ Fraud} \mid s)\cdot c \notag\\
& - \pr(\text{Bank Auth.}\mid s)\cdot c_0\notag\\
=& g_3(s)\cdot m-g_4(s)\cdot c -g_5(s)\cdot c_0\label{equ:exp-prof-rev}\\
\mathbb{E}[R_{rej}(w)]=&0.\label{equ:exp-prof-rej}
\end{align}
\end{subequations}
$g$-functions in Eq.\eqref{equ:exp-prof-app}-\eqref{equ:exp-prof-rej} are probabilities of different events given risk score $s$. $g$-function is short for gold function, whose values represent profit-related probabilities associated with different risk decisions.

We further delve into a realistic dynamic system, in which banks and MR decision behaviors are changing dynamically. We consider a discrete time dynamic control model with infinite time horizon $\mathcal{T}$. Let $\textbf{w}^{(t)}=\{w^{(t)}_1, w^{(t)}_2,...,w^{(t)}_{N(t)} \}$ be a set of transactions occurred during period $t\in\mathcal{T}$. Elements of this transaction set $w^{(t)}_j=(s^{(t)}_j,m^{(t)}_j,c^{(t)}_j)$  include the risk score $s^{(t)}_j$, margin $m^{(t)}_j$ and costs $c^{(t)}_j$ of this $j$th transaction in period $t$. Let $N^{(t)}$ be the total number of transactions occurred during period $t$, so $N^{(t)}$ is then a random variable. We can then formally define the dynamic control model as follow.
\begin{itemize}
\item State space: $\mathcal{S}=\{(g_1(s),g_2(s),g_3(s),g_4(s),g_5(s):s\in[\bar{s}]) \}$, which is a set of 5 $g$-functions values at all risk scores. In period $t$, the state can be expressed as $S^{(t)}=(g_1^{(t)}(s),g_2^{(t)}(s),g_3^{(t)}(s),g_4^{(t)}(s),g_5^{(t)}(s): s\in[\bar{s}])$.
\item Action space in period $t$: $\mathcal{A}^{(t)}=\{app, rev, rej\}^{N^{(t)}}$, which has $3^{N^{(t)}}$ feasible decision sequences. Let $\textbf{a}^{(t)}=\{a^{(t)}_1, a^{(t)}_2,...,a^{(t)}_{N(t)} \}$ be one feasible action sequence in period $t$, and for the $j$th transaction, risk control engine can choose action $a^{(t)}_j\in\{app,rev,rej\}$.
\item State transition probability matrix: $Q(\textbf{a})=[Q_{S,S'}(\textbf{a}):\forall S,S']$, where $Q_{S,S'}(\textbf{a})$ is the probability that system move from state $S$ to state $S'$ when taking action sequence $\textbf{a}$. We assume that $Q(\textbf{a})$ is fixed but implicit through out this paper.
\end{itemize}
Let $u(S^{(t)})$ be the reward-to-go function at the beginning of period $t$, then this stochastic dynamic model can be formulated with Bellman's equation as
\begin{align}
u(S^{(t)})=\max_{\textbf{a}^{(t)}\in \mathcal{A}^{(t)}}&\left\{\mathbb{E}\left[\sum_{j=1}^{N^{(t)}} R_{\textbf{a}^{(t)}_j}(w^{(t)}_j,S^{(t)})\right]\right.\notag\\ 
&\left.+\alpha\cdot \sum_{s^{(t+1)}}Q_{S^{(t)},S^{(t+1)}}(\textbf{a}^{(t)})\cdot u(S^{(t+1)}) \right\} \tag{Perfect} \label{mod:base}
\end{align}
where $\alpha$ is a discount factor of future rewards, and reward function can be formulated as
\begin{align*}
R_{\textbf{a}^{(t)}_j}(w^{(t)}_j,S^{(t)})=\left\{\begin{array}{ll}
g_1^{(t)}(s^{(t)}_j)\cdot m^{(t)}_j - g_2^{(t)}(s^{(t)}_j)\cdot c^{(t)}_j, & a^{(t)}_j=app\\
g_3^{(t)}(s^{(t)}_j)\cdot m^{(t)}_j-g_4^{(t)}(s^{(t)}_j)\cdot c^{(t)}_j -g_5^{(t)}(s)\cdot c_0, & a^{(t)}_j=rev\\
0, & a^{(t)}_j=rej
\end{array}\right.
\end{align*}
Throughout the entire paper, we assume that a finite number of transactions occurred in each period, and the reward of each transaction is bounded. Theorem \ref{thm:stationary-optimality} gives the condition that Model \eqref{mod:base} has a unique optimal solution.
\begin{theorem}\label{thm:stationary-optimality}
If (1) number of transaction occurred in each period is finite and margin/loss from each transaction is bounded, and (2) the arriving process of transactions is stationary, then there exists an optimal profit satisfying
\begin{align*}
u^*(S)=\max_{\textbf{a}}\left\{\mathbb{E}\left[\sum_{j=1}^{N} R_{\textbf{a}_j}(w_j,S)\right]+\alpha\cdot \sum_{S'}Q_{S,S'}(\textbf{a})\cdot u^*(S') \right\},
\end{align*}
and there is a unique solution to this equation.
\end{theorem}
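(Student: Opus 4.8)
The plan is to recognize the stated fixed-point equation as the Bellman optimality equation of a discounted infinite-horizon Markov decision process and to prove existence and uniqueness via the Banach contraction mapping theorem, following the standard template in \cite{MDP1994,DP1995}. First I would fix the function space: let $B(\mathcal{S})$ be the space of bounded real-valued functions on $\mathcal{S}$ equipped with the supremum norm $\|u\|_\infty=\sup_{S\in\mathcal{S}}|u(S)|$, which is a complete metric space, and define the Bellman operator $T:B(\mathcal{S})\to B(\mathcal{S})$ by
\begin{align*}
(Tu)(S)=\mathbb{E}\left[\max_{\textbf{a}}\left\{\sum_{j=1}^{N}R_{\textbf{a}_j}(w_j,S)+\alpha\sum_{S'}Q_{S,S'}(\textbf{a})\,u(S')\right\}\right],
\end{align*}
where the expectation is taken over the arrival process, i.e. over $N$ and the transaction attributes $\{w_j\}$. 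With this notation the claim is precisely that $T$ has a unique fixed point $u^*=Tu^*$ in $B(\mathcal{S})$.

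Second, I would check that $T$ is well defined, i.e. $Tu\in B(\mathcal{S})$ whenever $u\in B(\mathcal{S})$. Hypothesis (1) supplies a uniform bound $|R_{\textbf{a}_j}(w_j,S)|\le\bar R$ and finitely many transactions per period (so $N\le\bar N$ almost surely, or at least $\mathbb{E}[N]<\infty$ with bounded per-transaction reward), while hypothesis (2), stationarity of the arrival process, guarantees that the distribution of $(N,\{w_j\})$ is time-homogeneous, so that the same operator $T$ governs every period and the expectation is finite. Since each row $Q_{S,\cdot}(\textbf{a})$ is a probability vector, one obtains $\|Tu\|_\infty\le\bar N\bar R+\alpha\|u\|_\infty<\infty$, which also furnishes the a priori bound $\|u^*\|_\infty\le\bar N\bar R/(1-\alpha)$ on the candidate solution.

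Third, the crux: I would show $T$ is an $\alpha$-contraction. For $u,v\in B(\mathcal{S})$ the per-transaction reward terms cancel, and using $|\max_{\textbf{a}}f(\textbf{a})-\max_{\textbf{a}}h(\textbf{a})|\le\max_{\textbf{a}}|f(\textbf{a})-h(\textbf{a})|$ together with the fact that $Q_{S,\cdot}(\textbf{a})$ sums to one and monotonicity of the expectation,
\begin{align*}
|(Tu)(S)-(Tv)(S)|\le\mathbb{E}\left[\max_{\textbf{a}}\,\alpha\sum_{S'}Q_{S,S'}(\textbf{a})\,|u(S')-v(S')|\right]\le\alpha\|u-v\|_\infty .
\end{align*}
Since $0<\alpha<1$, the Banach fixed-point theorem yields a unique $u^*\in B(\mathcal{S})$ with $Tu^*=u^*$. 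Finally, to justify that this $u^*$ is the \emph{optimal} reward-to-go rather than merely a fixed point, I would invoke the usual verification argument, or equivalently convergence of value iteration $T^n u\to u^*$ for any bounded $u$, together with the policy-improvement inequality, to identify $u^*$ with the supremum of the discounted expected reward over all (stationary, measurable) control policies.

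The main obstacle I anticipate is not the contraction estimate, which is routine, but the bookkeeping around the random, period-dependent action set $\mathcal{A}^{(t)}=\{app,rev,rej\}^{N^{(t)}}$: because $N^{(t)}$ is itself a random variable, the inner ``$\max_{\textbf{a}}$'' must be read as a measurable selection that may depend on the realized transactions, and one must verify that the additive-over-transactions reward structure, combined with the action-dependent transition $Q(\textbf{a})$, still defines a genuine stationary MDP so that the expectation and the maximization interact as written. Nailing down measurability of the maximizing selection and confirming that the stationarity hypothesis makes the model time-homogeneous (hence governed by a single operator $T$) is where the real care is required; once that foundation is laid, the contraction argument and the appeal to Banach's theorem are standard.
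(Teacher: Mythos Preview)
Your proposal is correct and follows exactly the route the paper takes: the paper does not write out a proof but simply states that the result ``is guaranteed by contracting mapping argument and directly follows Theorem~6.2.3 and Theorem~6.2.5 from \cite{MDP1994},'' which is precisely the Banach fixed-point argument on $B(\mathcal{S})$ that you have spelled out. Your only deviation is swapping the order of $\max_{\textbf{a}}$ and $\mathbb{E}$ in the definition of $T$ relative to the paper's displayed equation, a point you already flag in your final paragraph; since the paper is informal about this and defers all details to Puterman, your version is if anything more careful than the original.
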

Theorem \ref{thm:stationary-optimality} is guaranteed by contracting mapping argument and directly follows Theorem 6.2.3 and Theorem 6.2.5 from \cite{MDP1994}.
\subsection{Incomplete Information and Intractableness of \eqref{mod:base} Model}\label{sec:intractable-issue}
Although Theorem \ref{thm:stationary-optimality} provides solid guidance to find the optimal control strategy, there are several issues of implementing Model \ref{mod:base} in reality.
\begin{enumerate}[(1)]
\item Exact state information is unavailable: State information, i.e. $g$-functions, can only be inferred using partially mature data, since data maturity lead time is a latent random variable with range $[0, L]$. We have no way to obtain the true time point of maturity for each transaction until the transaction is eventually marked as a chargeback.  However, through analyzing the historical data we do have the knowledge that after $L$ periods of time the fraud status (having chargeback or not) should be all mature;
\item Reward functions are not entirely exact: Reward functions, $R(\cdot)$, are based on estimations of $g$-functions, and $\textbf{w}^{(t)}$ is not known a priori. Therefore, reward functions could vary due the different estimated $g$-functions and the different $\textbf{w}^{(t)}$;
\item Transition probability matrix $Q$ does not have an explicit form: State space $\mathcal{S}$ has extremely high dimension (five $g$-functions estimated at $(\bar{s}+1)$ risk scores); Action space $\mathcal{A}$ has exponential dimension that explosively increase as number of transactions increases ($\mathcal{A}^{(t)}=\{app, rev, rej\}^{N^{(t)}}$ has $3^{N^{(t)}}$ possible decision sequences).
\end{enumerate}
The lag of data maturity and the curse of dimensionality lead to the fact that Model \eqref{mod:base} is intractable. Thus we propose three approximate dynamic heuristics to obtain suboptimal control decisions. Details of these different control algorithms will be demonstrated in Section \ref{sec:control}. 

All dynamic control heuristics require a base module which utilizes incomplete information, such as the mature old data and the partially mature recent data, to infer future $g$-functions in these heuristic algorithms. Data mining results suggest that correlations exist between recent $l$ period's partially mature chargeback rate and bank/MR behavior patterns. This fact implies that we should track partially mature chargeback rate of transactions portfolio in period $t-l$, so that $g$-functions can be properly calibrated. This happens to have the same view with business intuitions in multi-party fraud control: If bank and MR learn that recently received transactions have high chargeback rate, they will become more conservative with their decision making by reducing the number of authorization/approval decisions to prevent more undesirable chargebacks. Two decision environment modules, Current Environment Inference (CEI) module and Future Environment Inference (FEI) module, are adopted from \cite{GFunctionEstimation2018TKDE}. Discussion of these two modules are out of the scope of the current paper, we suggest readers refer to \cite{GFunctionEstimation2018TKDE} for details of CEI and FEI modules. CEI and FEI utilize historical data to produce $g$-function estimations, which contribute to the data-driven property of our risk control framework.

\section{Dynamic Risk Control Algorithms}\label{sec:control}
In this section, we propose three different dynamic risk control algorithms: Naive, Myopic and Prospective control. Naive control is the simplest heuristic algorithm that only uses fully mature data before period $t-L$. Myopic control estimates the current decision environment using CEI module with both mature and immature data in period $t-l$. The most complex control model, Prospective control, further takes into account that current decision will influence not only the current profit but also the near future profit. Three models are demonstrated in Section \ref{sec:naive} - \ref{sec:prospective}.

\subsection{Naive control}\label{sec:naive}
Figure \ref{fig:naive} depicts decision flow of naive control. At the beginning of period $t$, decision engine uses mature data before period $t-L$ to estimate $g$-functions.
\begin{figure}[ht]
\centering
\includegraphics[scale=0.7]{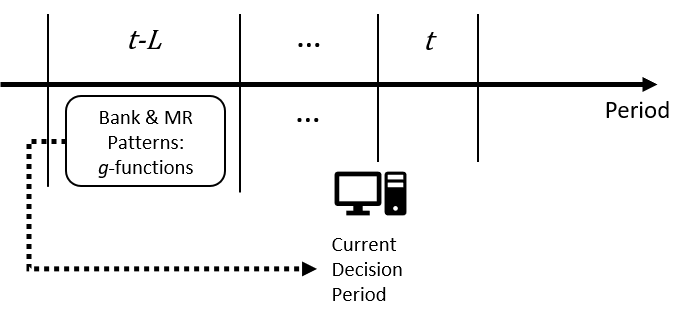}
\caption{Naive dynamic control}
\label{fig:naive}
\end{figure}

In period $t\in\mathcal{T}$, let $\hat S^{(t)}$=$(g_1^{(t-L)}(s)$, $g_2^{(t-L)}(s)$, $g_3^{(t-L)}(s)$, $g_4^{(t-L)}(s)$, $g_5^{(t-L)}(s):\forall s)$ be the estimated current state, and $\mathcal{A}^{(t)}=\{app, rev, rej\}^{N^{(t)}}$ be the action space of period $t$. Then feasible action sequence has a form of $a^{(t)}=\{a^{(t)}_1, a^{(t)}_2,...,a^{(t)}_{N^{(t)}} \}$, where $a^{(t)}_j\in\{app,rev,rej\}$. Naive model disregards the future effects. For $N^{(t)}$ transactions take place in period $t$, we need to solve the following model to get action sequence $a^{(t)*}$.
\begin{align}\label{mod:naive}
\max_{a^{(t)}\in \mathcal{A}^{(t)}}&\mathbb{E} \left[\sum_{j=1}^{N^{(t)}} \hat R_{a^{(t)}_j}(w^{(t)}_j)\right] \tag{Naive-t}\\
s.t.\quad & \mathbb{E}[\hat R_{app}(w_j^{(t)})]=g^{(t-L)}_1(s^{(t)})\cdot m - g^{(t-L)}_2(s^{(t)})\cdot c \notag\\
& \mathbb{E}[\hat R_{rev}(w_j^{(t)})]=g^{(t-L)}_3(s^{(t)})\cdot m - g^{(t-L)}_4(s^{(t)})\cdot c- g^{(t-L)}_5(s^{(t)})\cdot c_0 \notag\\
& \mathbb{E}[\hat R_{rej}(w_j^{(t)})]=0 \notag\\
&\mathcal{A}^{(t)}=\{app, rev, rej\}^{N^{(t)}}\notag
\end{align}
Naive control repeats this procedure for each period $t$. Theorem \ref{thm:greedy-naive} claims that \eqref{mod:naive} can be easily solved by greedily choosing the decision option that yields the highest expected reward for each incoming transaction. Details about Naive control policy is summarized in Algorithm \ref{alg:naive}.
\begin{theorem}\label{thm:greedy-naive}
Optimal action sequence $a^{(t)*}$ of \eqref{mod:naive} can be obtained by the greedy algorithm, i.e. for $w_j^{(t)}\in \textbf{w}^{(t)}$, sequentially set
\begin{align*}
a_j^{(t)*}=\arg\max_{a_j^{(t)}\in\{app,rev,rej\}} \mathbb{E}[\hat R_{a_j^{(t)}}(w_j^{(t)})]
\end{align*}
\end{theorem}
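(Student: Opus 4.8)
The plan is to exploit the two structural features of \eqref{mod:naive}: the objective is an expectation of a \emph{finite sum} of per-transaction rewards, and the feasible region $\mathcal{A}^{(t)}=\{app,rev,rej\}^{N^{(t)}}$ is a \emph{Cartesian product} with no constraint linking the decisions of distinct transactions. Together these reduce the combinatorial maximization over $3^{N^{(t)}}$ sequences to $N^{(t)}$ independent three-way maximizations, which is precisely the greedy rule.

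First I would observe that, conditional on the realized transaction set $\textbf{w}^{(t)}$ and the estimated state $\hat S^{(t)}=(g_i^{(t-L)})$ (both fixed at the start of period $t$ under the Naive scheme), each quantity $\mathbb{E}[\hat R_{a_j^{(t)}}(w_j^{(t)})]$ is a deterministic number given by one of the three closed-form expressions appearing in the constraints of \eqref{mod:naive}; in particular it depends on the action only through $a_j^{(t)}$, and not through $a_k^{(t)}$ for any $k\neq j$. This is where the within-period structure of the Naive model is used: \eqref{mod:naive} holds the environment $\hat S^{(t)}$ frozen across the whole period, so an action taken on one transaction does not perturb the reward distribution of another.

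Next, by linearity of expectation $\mathbb{E}\big[\sum_{j=1}^{N^{(t)}}\hat R_{a_j^{(t)}}(w_j^{(t)})\big]=\sum_{j=1}^{N^{(t)}}\mathbb{E}[\hat R_{a_j^{(t)}}(w_j^{(t)})]$, and since the feasible set is the product $\prod_{j=1}^{N^{(t)}}\{app,rev,rej\}$, the maximum of a sum of functions each depending on a single coordinate equals the sum of the coordinatewise maxima: $\max_{a^{(t)}\in\mathcal{A}^{(t)}}\sum_j \mathbb{E}[\hat R_{a_j^{(t)}}(w_j^{(t)})]=\sum_j \max_{a_j^{(t)}\in\{app,rev,rej\}}\mathbb{E}[\hat R_{a_j^{(t)}}(w_j^{(t)})]$, a common maximizer being obtained by choosing, for each $j$ independently, any element of $\arg\max_{a_j^{(t)}}\mathbb{E}[\hat R_{a_j^{(t)}}(w_j^{(t)})]$. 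That is exactly the rule in the statement, and the equality of the two optimal values certifies optimality; finiteness of $N^{(t)}$ and boundedness of the rewards (the paper's standing assumptions) ensure the maxima are attained over the finite action set. Since $N^{(t)}$ is random, the argument is run conditionally on its realization, after which the unconditional claim follows by taking expectations over $N^{(t)}$.

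Honestly there is no serious obstacle here: the result is essentially the fact that separable optimization over a box decomposes coordinatewise. The only points deserving a word of care are (i) making explicit that the per-transaction expected reward is unaffected by the other transactions' actions — a direct consequence of the Naive model's assumption that the state is frozen within a period, the cross-period looping effect encoded by $Q$ being deliberately discarded in \eqref{mod:naive} — and (ii) a brief remark that ties in the $\arg\max$ may be broken arbitrarily without changing the optimal value.
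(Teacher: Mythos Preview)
Your proposal is correct and matches the paper's own argument: the paper simply notes that the rewards of different transactions are independent, so \eqref{mod:naive} decomposes into $N^{(t)}$ sub-maximization problems solved exactly by the greedy rule. Your write-up is just a more detailed unpacking of that same separability observation.
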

\begin{proof}
The rewards of different transactions are independent, and for period $t$, \eqref{mod:naive} can be decomposed into $N^{(t)}$ sub-maximization problems. Thus the greedy algorithm can solve \eqref{mod:naive} exactly.
\end{proof}

\begin{algorithm}[htbp]
\caption{Naive Dynamic Control}\label{alg:naive}
Repeat for period $t\in\mathcal{T}$:
\begin{algorithmic}[1]
\STATE Estimate $g_1^{(t-L)}(s),g_2^{(t-L)}(s),g_3^{(t-L)}(s),g_4^{(t-L)}(s)$ and $g_5^{(t-L)}(s)$ using all the data until the end of period $t-L$, let $\hat S^{(t)}=(g_1^{(t-L)}(s), g_2^{(t-L)}(s), g_3^{(t-L)}(s), g_4^{(t-L)}(s), g_5^{(t-L)}(s):\forall s)$;
\FOR {$j=1,2,...,N^{(t)}$}
\STATE $a_j^{(t)*}=\arg\max_{a_j^{(t)}\in\{app,rev,rej\}} \mathbb{E}[\hat R_{a_j^{(t)}}(w_j^{(t)})]$.
\ENDFOR
\end{algorithmic}
\end{algorithm}	

\subsection{Myopic control}\label{sec:myopic}
Figure \ref{fig:myopic} shows the decision flow of myopic control. \\
\begin{figure}[ht]
\centering
\includegraphics[scale=0.7]{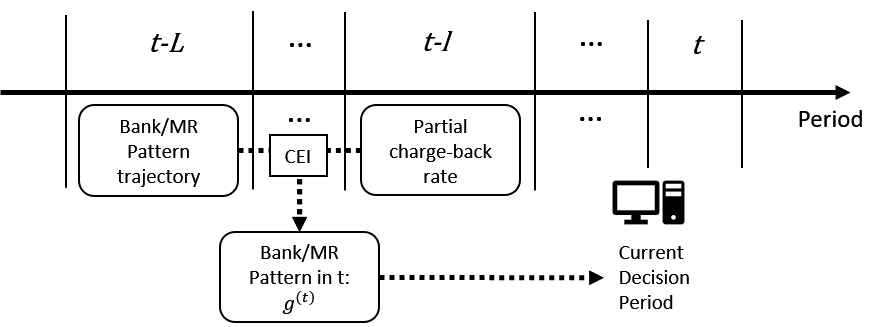}
\caption{Myopic dynamic control}
\label{fig:myopic}
\end{figure}

This control model is designed to resolve the pattern recognition lag issue due to the delay of data maturity. We adopt CEI module from \cite{GFunctionEstimation2018TKDE} to infer current period decision environments. Mathematically, CEI maps matured $g$-function trajectories ($g^{(t')}_1(s)$, $g^{(t')}_2(s)$, $g^{(t')}_3(s)$, $g^{(t')}_4(s)$, and $g^{(t')}_5(s)$: $t'\le t-L$) and partially mature chargeback rate $\rho_{PCB}^{(t-l)}$ to estimate $g$-functions ($\hat g^{(t)}_1(s), \hat g^{(t)}_2(s), \hat g^{(t)}_3(s), \hat g^{(t)}_4(s)$ and $\hat g^{(t)}_5(s)$) at current period, $t$.
\begin{align}
\begin{bmatrix}
\hat g^{(t)}_1(s)\\
\hat  g^{(t)}_2(s)\\
\hat  g^{(t)}_3(s)\\
\hat  g^{(t)}_4(s)\\
\hat  g^{(t)}_5(s)
\end{bmatrix} =
\begin{bmatrix}
\hat \Phi_1^{(t)} \left(s,g^{(t')}_1(s),\rho_{PCB}^{(t-l)}\right)\\
\hat \Phi_2^{(t)} \left(s,g^{(t')}_2(s),\rho_{PCB}^{(t-l)}\right)\\
\hat \Phi_3^{(t)} \left(s,g^{(t')}_3(s),\rho_{PCB}^{(t-l)}\right)\\
\hat \Phi_4^{(t)} \left(s,g^{(t')}_4(s),\rho_{PCB}^{(t-l)}\right)\\
\hat \Phi_5^{(t)} \left(s,g^{(t')}_5(s),\rho_{PCB}^{(t-l)}\right)
\end{bmatrix}\tag{CEI}
\end{align}
where ${\rho}^{t-l}_{PCB}$ is calculated by
$${\rho}^{t-l}_{PCB}=\frac{\left(\begin{array}{c}
	\text{\# of chargeback transactions}\\ \text{in week $t-l$ occurred before week $t$}
	\end{array}\right)}{\text{(\# of finally approved transactions in week $t-l$)}}.$$
Then for $N^{(t)}$ transactions occurred in period $t$, Myopic Dynamic Control model solves the following model to get action sequence $a^{(t)*}$.
\begin{align}\label{mod:myopic}
\max_{a^{(t)}\in \mathcal{A}^{(t)}}&\mathbb{E} \left[\sum_{j=1}^{N^{(t)}} \hat R_{a^{(t)}_j}(w^{(t)}_j)\right] \tag{Myopic-t}\\
s.t.\quad & \mathbb{E}[\hat R_{app}(w_j^{(t)})]=\hat g^{(t)}_1(s^{(t)})\cdot m - \hat g^{(t)}_2(s^{(t)})\cdot c \notag\\
& \mathbb{E}[\hat R_{rev}(w_j^{(t)})]=\hat g^{(t)}_3(s^{(t)})\cdot m - \hat g^{(t)}_4(s^{(t)})\cdot c - \hat g^{(t)}_5(s^{(t)})\cdot c_0 \notag\\
& \mathbb{E}[\hat R_{rej}(w_j^{(t)})]=0 \notag\\
&\mathcal{A}^{(t)}=\{app, rev, rej\}^{N^{(t)}}\notag
\end{align}
CEI module is updated at the beginning of each period and \eqref{mod:myopic} is solved during each period to provide optimal control actions. Theorem \ref{thm:myopic-greedy} provides theoretical guarantee that \eqref{mod:myopic} can be solved by the greedy method. Details of Myopic control policy is summarized in Algorithm \ref{alg:myopic}.
\begin{theorem}\label{thm:myopic-greedy}
The optimal action sequence $a^{(t)*}$ of \eqref{mod:myopic} can be obtained by the greedy algorithm, i.e. for $w_j^{(t)}\in \textbf{w}^{(t)}$, sequentially set
\begin{align*}
a_j^{(t)*}=\arg\max_{a_j^{(t)}\in\{app,rev,rej\}} \mathbb{E}[\hat R_{a_j^{(t)}}(w_j^{(t)})]
\end{align*}
\end{theorem}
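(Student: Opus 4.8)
The plan is to argue exactly as in the proof of Theorem~\ref{thm:greedy-naive}: show that the objective of \eqref{mod:myopic} is separable across transactions, so that maximization over the product action space $\mathcal{A}^{(t)}=\{app,rev,rej\}^{N^{(t)}}$ reduces to $N^{(t)}$ independent single-transaction maximizations, which is precisely the greedy rule. First I would observe that the CEI module is invoked once at the beginning of period $t$ and produces the estimates $\hat g^{(t)}_1(s),\dots,\hat g^{(t)}_5(s)$ from matured $g$-function trajectories (data no later than period $t-L$) together with the partially mature chargeback rate $\rho^{t-l}_{PCB}$ of period $t-l$. Since $t-L$ and $t-l$ are strictly earlier than $t$, these inputs — and hence the estimated state $\hat S^{(t)}$ and the expected-reward coefficients $\hat g^{(t)}_i(s^{(t)}_j)$ — are exogenous to the action sequence $a^{(t)}$ chosen during period $t$. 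Consequently, as displayed in the three constraint lines of \eqref{mod:myopic}, each $\mathbb{E}[\hat R_{a^{(t)}_j}(w^{(t)}_j)]$ is a function of the single action $a^{(t)}_j\in\{app,rev,rej\}$ and of $w^{(t)}_j=(s^{(t)}_j,m^{(t)}_j,c^{(t)}_j)$ only; it does not involve $a^{(t)}_i$ for $i\neq j$.

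Next, by linearity of expectation the objective splits as $\mathbb{E}\big[\sum_{j=1}^{N^{(t)}}\hat R_{a^{(t)}_j}(w^{(t)}_j)\big]=\sum_{j=1}^{N^{(t)}}\mathbb{E}[\hat R_{a^{(t)}_j}(w^{(t)}_j)]$, a sum whose $j$-th summand involves only the $j$-th coordinate of $a^{(t)}$. Because $\mathcal{A}^{(t)}$ is the Cartesian product $\prod_{j=1}^{N^{(t)}}\{app,rev,rej\}$, maximizing a separable sum over a product set is achieved coordinatewise, i.e. $\max_{a^{(t)}}\sum_j \mathbb{E}[\hat R_{a^{(t)}_j}(w^{(t)}_j)]=\sum_j \max_{a^{(t)}_j}\mathbb{E}[\hat R_{a^{(t)}_j}(w^{(t)}_j)]$, with maximizer $a_j^{(t)*}=\arg\max_{a^{(t)}_j\in\{app,rev,rej\}}\mathbb{E}[\hat R_{a^{(t)}_j}(w^{(t)}_j)]$ — exactly the greedy choice of Algorithm~\ref{alg:myopic}. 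Finiteness of $N^{(t)}$ and boundedness of per-transaction rewards (the standing assumptions of the paper) ensure the sum and each of the three-way maxima are well defined.

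The only point needing care — and the part I would treat as the main obstacle, mild as it is — is confirming that inserting the CEI estimation step does not secretly couple the transactions: one must check that $\rho^{t-l}_{PCB}$ and the matured trajectories feeding $\hat\Phi_1^{(t)},\dots,\hat\Phi_5^{(t)}$ are measurable with respect to information realized strictly before period $t$, so that $\hat S^{(t)}$ is a fixed (deterministic) input at the moment the period-$t$ actions are selected, rather than a quantity that shifts with the actions taken on $w^{(t)}_1,\dots,w^{(t)}_{N^{(t)}}$. This is guaranteed by the construction of CEI (it reads only past data), and once it is granted the separability argument above is identical to the Naive case and the proof closes in a few lines.
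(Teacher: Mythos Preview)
Your proposal is correct and follows the same approach as the paper: the paper simply states that the proof is similar to that of Theorem~\ref{thm:greedy-naive} and omits it, and your separability argument is exactly that proof spelled out, with the additional (and appropriate) observation that the CEI-based estimates $\hat g^{(t)}_i$ depend only on data from periods before $t$ and hence introduce no coupling among the period-$t$ actions.
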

The proof of Theorem \ref{thm:myopic-greedy} is similar with proof of Theorem \ref{thm:greedy-naive} and thus omitted.
\begin{algorithm}[htbp]
\caption{Myopic Dynamic Control}\label{alg:myopic}
Repeat for period $t\in\mathcal{T}$:
\begin{algorithmic}[1]
\STATE Calculate $g_1^{(t-L)}(s),g_2^{(t-L)}(s),g_3^{(t-L)}(s),g_4^{(t-L)}(s)$ and $g_5^{(t-L)}(s)$ using all the data until the end of period $t-L$, calculate $\rho^{(t-l)}_{PCB}$ using partially mature data in period $t-l$;
\STATE Estimate $\hat g_1^{(t)}(s)$, $\hat g_2^{(t)}(s)$, $\hat g_3^{(t)}(s)$, $\hat g_4^{(t)}(s)$ and $\hat g_5^{(t)}(s)$ using CEI module, and let $\hat S^{(t)}=(\hat g_1^{(t)}(s),\hat g_2^{(t)}(s),\hat g_3^{(t)}(s),\hat g_4^{(t)}(s):\forall s)$;
\FOR {$j=1,2,...,N^{(t)}$}
\STATE $a_j^{(t)*}=\arg\max_{a_j^{(t)}\in\{app,rev,rej\}} \mathbb{E}[\hat R_{a_j^{(t)}}(w_j^{(t)})]$.
\ENDFOR
\STATE Re-train and update CEI module.
\end{algorithmic}
\end{algorithm}

\subsection{Prospective control}\label{sec:prospective}
Figure \ref{fig:prospective} depicts decision flow of prospective control.
\begin{figure}[ht]
\centering
\includegraphics[scale=0.7]{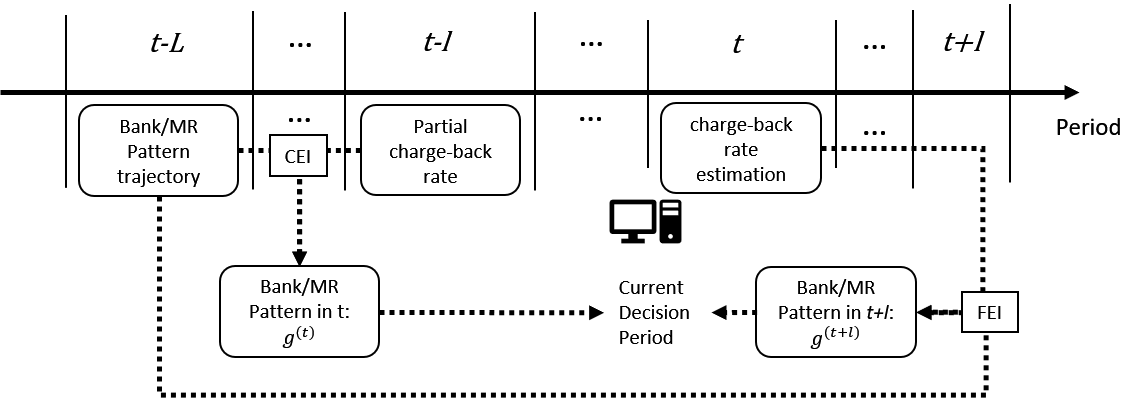}
\caption{Prospective dynamic control}
\label{fig:prospective}
\end{figure}
Prospective control model has a similar CEI module that can diminish pattern recognition lag. In addition, FEI module is adopted from \cite{GFunctionEstimation2018TKDE} to estimate future decision environment change due to the action taken at current period. These environments are characterized by the $g$-functions of period $t$ and $t+l$. Similar with Myopic control, in period $t\in\mathcal{T}$, we use the output of the CEI module as the state estimation, i.e. $\hat S^{(t)}=(\hat g_1^{(t)}(s),\hat g_2^{(t)}(s),\hat g_3^{(t)}(s),\hat g_4^{(t)}(s),\hat g_5^{(t)}(s):\forall s)$. Action space of period $t$ is still $\mathcal{A}^{(t)}=\{app, rev, rej\}^{N^{(t)}}$. While different from previous two control models, prospective control considers future effects caused by the current decisions: the action sequences will play a role on the behavior patterns of bank and MR in period $t+l$. For $N^{(t)}$ transactions occurred in period $t$, we need to solve the following model to get our action sequence $a^{(t)*}$.
\begin{align}\label{mod:prospective}
\max_{a^{(t)}\in \mathcal{A}^{(t)}}&\mathbb{E} \left[\sum_{j=1}^{N^{(t)}} \hat R_{a^{(t)}_j}(w^{(t)}_j)\right]+ \lambda\cdot \Delta \tag{Prospective-t}\\
s.t.\quad & \mathbb{E}[\hat R_{app}(w_j^{(t)})]=\hat g^{(t)}_1(s_j^{(t)})\cdot m_j^{(t)}- \hat g^{(t)}_2(s_j^{(t)})\cdot c_j^{(t)},\ \forall j  \notag\\
& \mathbb{E}[\hat R_{rev}(w_j^{(t)})]=\hat g^{(t)}_3(s_j^{(t)})\cdot m_j^{(t)}\notag - \hat g^{(t)}_4(s_j^{(t)})\cdot c_j^{(t)} - \hat g^{(t)}_5(s_j^{(t)})\cdot c_0,\ \forall j \notag\\
& \mathbb{E}[\hat R_{rej}(w_j^{(t)})]=0,\ \forall j \notag\\
&\mathcal{A}^{(t)}=\{app, rev, rej\}^{N^{(t)}}\notag
\end{align}
where $\lambda$ is a discount factor, and $\Delta$ is a reference future profit of period $t+l$. A reference sample from mature control group is bootstrapped from mature data set in order to provide reference future profit $\Delta$. Let this reference transaction set sample be $\tilde{\textbf{w}}^{(t+l)}=\{\tilde w_1^{(t+l)},\tilde w_2^{(t+l)},...,w_m^{(t+l)} \}$ with $m$ elements. FEI module includes two sub-procedures: 
\begin{enumerate}[(1)]
	\item  Calculate estimated chargeback rate of period $t$, $\rho^{(t)}_{CB}$: at a given time point during period $t$, suppose we have received $n'$ transaction request, and our decision action sequence is $(a^t_1,...,a^t_{n'})$, we can then estimate charge back rate of period $t$, 
	\begin{align}\label{equ:rho-estimation}
	\hat \rho^{(t)}_{CB}=\frac{1}{\sum_{j=1}^{n'}\delta_{(a^t_j\neq Rej.)}}\left(\sum_{j=1}^{n'} \hat g^t_2(s^t_j)\cdot\delta_{(a^t_j=App.)}+\sum_{j=1}^{n'} \hat g^t_4(s^t_j)\cdot\delta_{(a^t_j=Rev.)} \right)
	\end{align}
	where $\delta_{(\cdot)}$ is the indicator function.
	\item Predict future $g$-functions ($g^{(t+l)}_1(s)$, $g^{(t+l)}_2(s)$, $g^{(t+l)}_3(s)$, $g^{(t+l)}_4(s)$ and $g^{(t+l)}_5(s)$) with matured $g$-function trajectories ($g^{(t')}_1(s)$, $g^{(t')}_2(s)$, $g^{(t')}_3(s)$, $g^{(t')}_4(s)$, and $g^{(t')}_5(s)$: $t'\le t-L$) and estimate weekly full chargeback rate $\rho_{CB}^{(t)}$. FEI is trained with mature data and
	\begin{align}\label{equ:fei}
	\begin{bmatrix}
	\hat g^{(t)}_1(s)\\
	\hat  g^{(t)}_2(s)\\
	\hat  g^{(t)}_3(s)\\
	\hat  g^{(t)}_4(s)\\
	\hat  g^{(t)}_5(s)
	\end{bmatrix} =
	\begin{bmatrix}
	\hat \Psi_1^{(t)} \left(s,g^{(t')}_1(s),\hat \rho_{CB}^{(t)}\right)\\
	\hat \Psi_2^{(t)} \left(s,g^{(t')}_2(s),\hat \rho_{CB}^{(t)}\right)\\
	\hat \Psi_3^{(t)} \left(s,g^{(t')}_3(s),\hat \rho_{CB}^{(t)}\right)\\
	\hat \Psi_4^{(t)} \left(s,g^{(t')}_4(s),\hat \rho_{CB}^{(t)}\right)\\
	\hat \Psi_5^{(t)} \left(s,g^{(t')}_5(s),\hat \rho_{CB}^{(t)}\right)
	\end{bmatrix}\tag{FEI}
	\end{align}
\end{enumerate}

\eqref{mod:prospective} is hard to solve due to high dimension of $a^{(t)}$ and non-analytic form of $\Delta$. A similar real-time updated greedy heuristic is introduced to obtain a sub optimal solution for \eqref{mod:prospective}. This Real-time Greedy Heuristic (RGH) allows us to update estimation of $\hat \rho^{(t)}_{CB}$ on the fly and to adjust our strategy within period $t$. Figure \ref{fig:rgh} illustrates the logics of RGH within period $t$. \\
\begin{figure}[h]
\centering
\includegraphics[scale=0.8]{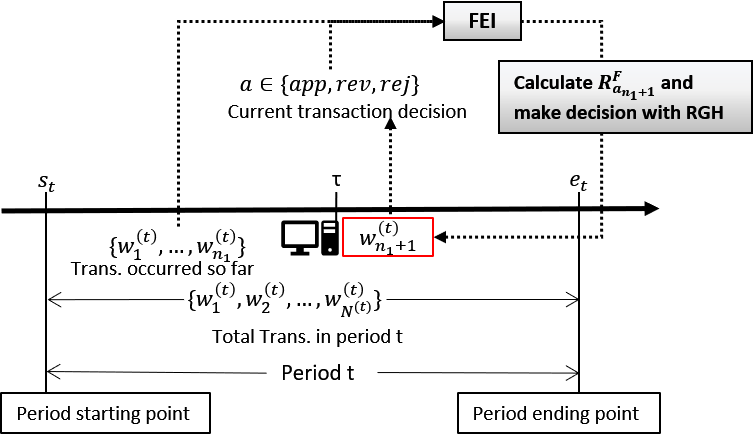}
\caption{Real-time greedy heuristic demonstration}\label{fig:rgh}
\end{figure}

Let time $\tau$ be a decision time point in period $t$ where transaction $w^{(t)}_{n_1+1}$ occurs and risk team needs to make decision either to approve, reject or manual review this transaction. Suppose from $s_t$, starting point of period $t$, to current decision point $\tau$, we have observed $n_1$ transactions. Hence, we can estimate the chargeback rate of period $t$, if we approve, review or reject $w^{(t)}_{n_1+1}$ using Eq. \eqref{equ:rho-tau}.
\begin{align}\label{equ:rho-tau}
\hat \rho^{(t)}_{CB}(\tau)=\frac{1}{\sum_{j=1}^{n_1+1}\delta_{(a^t_j\neq Rej.)}}\left(\sum_{j=1}^{n_1+1} \hat g^t_2(s^t_j)\cdot\delta_{(a^t_j=App.)}+\sum_{j=1}^{n_1+1} \hat g^t_4(s^t_j)\cdot\delta_{(a^t_j=Rev.)} \right)
\end{align}

We further estimate the expected reward of approval, review or rejection of $w^{(t)}_{n_1+1}$. Note that the future effect is first averaged to reward per transaction and then discounted by a factor of $\lambda$.
\begin{subequations}\label{equ:f-profit}
\begin{align}
&R^{F}_{app}(w^{(t)}_{n_1+1})=\mathbb{E}[\hat R_{app}(w^{(t)}_{n_1+1})]+\frac{\lambda}{m}\Delta_{\tau,app}\\
&R^{F}_{rev}(w^{(t)}_{n_1+1})=\mathbb{E}[\hat R_{rev}(w^{(t)}_{n_1+1})]+\frac{\lambda}{m}\Delta_{\tau,rev}\\
&R^{F}_{rej}(w^{(t)}_{n_1+1})=\mathbb{E}[\hat R_{rej}(w^{(t)}_{n_1+1})]+\frac{\lambda}{m}\Delta_{\tau,rej}
\end{align}
\end{subequations}
and for $a\in\{app,rev,rej\}$,
\begin{align}\label{equ:delta-estimation}
\Delta_{\tau,a}=\\
\max_{a^{(t+l)}\in \tilde{\mathcal{A}}^{(t+l)}}&\mathbb{E} \left[\sum_{k=1}^{m} \hat R_{a^{(t)}_k}(\tilde w^{(t+l)}_k)\right]\notag\\
s.t.\quad & \mathbb{E}[\hat R_{app}(\tilde w_k^{(t+l)})]=\hat g^{(t+l)}_1(\tilde s^{(t+l)}_k)\cdot\tilde  m^{(t+l)}_k - \hat g^{(t+l)}_2(\tilde s^{(t+l)}_k)\cdot\tilde  c^{(t+l)}_k, \notag\\
& \mathbb{E}[\hat R_{rev}(\tilde w_k^{(t+l)})]=\hat g^{(t+l)}_3(\tilde s^{(t+l)}_k)\cdot\tilde  m^{(t+l)}_k - \hat g^{(t+l)}_4(\tilde s^{(t+l)}_k)\cdot\tilde  c^{(t+l)}_k \notag\\&\qquad - \hat g^{(t+l)}_5(\tilde s^{(t+l)}_k)\cdot c_0,\notag\\
& \mathbb{E}[\hat R_{rej}(\tilde w_k^{(t+l)})]=0,\notag\\
& \tilde{\mathcal{A}}^{(t+2)}=\{app, rev, rej\}^{m}\notag
\end{align}
where $\hat \rho_{CB}^{(t)}(\tau)$ are calculated using Eq. \eqref{equ:rho-tau}, and $\hat g^{(t+l)}_{(\cdot)}$ is derived by \eqref{equ:fei}. RGH sequentially assigns action that has the largest prospective reward to each incoming transaction. For $w^{(t)}_j\in \textbf{w}^{(t)}$, we sequentially set
\begin{align}\label{equ:rgh}
a^{(t)*}_j=\arg\max_{a^{(t)}_j\in\{App.,Rev.,Rej.\}}R^{F}_{a^{(t)}_j}(w^{(t)}_j).\tag{Prospective-RGH}
\end{align}
Prospective control algorithm is summarized in Algorithm \ref{alg:prospective}.
\begin{algorithm}[htbp]
\caption{Prospective Dynamic Control}\label{alg:prospective}
Repeat for period $t\in\mathcal{T}$:
\begin{algorithmic}[1]
\STATE Calculate $g_1^{(t-L)}(s),g_2^{(t-L)}(s),g_3^{(t-L)}(s),g_4^{(t-L)}(s)$ and $g_5^{(t-L)}(s)$ using all the data until the end of period $t-L$, calculate $\rho^{(t-l)}_{PCB}$ using partially mature data in period $t-l$;
\STATE Estimate $\hat g_1^{(t)}(s)$, $\hat g_2^{(t)}(s)$, $\hat g_3^{(t)}(s)$, $\hat g_4^{(t)}(s)$ and $\hat g_5^{(t)}(s)$ using CEI module, and let $\hat S^{(t)}=(\hat g_1^{(t)}(s),\hat g_2^{(t)}(s),\hat g_3^{(t)}(s),\hat g_4^{(t)}(s):\forall s)$;
\STATE Initialize $\hat g_1^{(t+2)}(s)$, $\hat g_2^{(t+2)}(s)$, $\hat g_3^{(t+2)}(s)$, $\hat g_4^{(t+2)}(s)$ and $\hat g_5^{(t+2)}(s)$ by setting them equal to $\hat g_1^{(t)}(s)$, $\hat g_2^{(t)}(s)$, $\hat g_3^{(t)}(s)$, $\hat g_4^{(t)}(s)$ and $\hat g_5^{(t)}(s)$ respectively;
\FOR {$j=1,2,...,N^{(t)}$}
\STATE Calculate $\hat \rho_{CB}^{(t)}(\tau_j)$ for $a_j^{(t)}\in\{app,rev,rej\}$ using Eq.\eqref{equ:rho-tau};
\STATE Calculate future reference profits $\Delta_{\tau,a}$ using Eq.\eqref{equ:delta-estimation};
\STATE Calculate prospective profits $\hat R^F_{a_j^{(t)}}(w^{(t)}_j)$ and set $$a_j^{(t)*}=\arg\max_{a_j^{(t)}\in\{app,rev,rej\}} \mathbb{E}[\hat R^F_{a_j^{(t)}}(w^{(t)}_j)].$$
\STATE Update chargeback rate estimation $\hat \rho^{(t)}_{CB}$ using Eq.\eqref{equ:rho-estimation}, and calculate $\hat g_1^{(t+2)}(s)$, $\hat g_2^{(t+2)}(s)$, $\hat g_3^{(t+2)}(s)$, $\hat g_4^{(t+2)}(s)$ and $\hat g_5^{(t+2)}(s)$.
\ENDFOR
\STATE Re-train and update CEI and FEI modules.
\end{algorithmic}
\end{algorithm}

\section{Field Tests on Microsoft E-commerce}\label{sec:test}
Field tests were conducted to exam the performances of these three dynamic models. Testing dataset was extracted from a sub-unit of Microsoft E-commerce business. We sample no more than 3\% of total transactions as the testing data set. For transactions in the testing set, we recorded decisions in our database while we flipped all final rejected transactions to final approval, so that we could obtain unbiased chargeback signals for model training and profit calculation. We set the length of the testing period to one week and tested all dynamic control model paralleling with current Microsoft inline decision engine. 

Our data indicated that maximum  lead time for the data maturity was $L=12$, and the recent partially mature reference time was $l=2$. The testing time window is 14 weeks, and the bank and MR decisions for each transaction are kept identical for different control methods to ensure apple-to-apple comparison. The historical data continued to be maturing while the testing time moved forward. For Naive and Myopic controls, the risk decision engine was updated weekly ($g$-functions and (CEI) module was retrained at the beginning of each period). For Prospective control, the risk decision engine refreshed the belief of current $g$-functions, (CEI) module and (FEI) module once a week, while estimations of current week chargeback rate and future $g$-functions in real-time were updated. Due to the Microsoft's confidentiality requirements, the name of the E-commerce sub-unit is muted, and this section only includes the summarized feature values that were aggregated over a 14-week of transaction period to demonstrate the usability of Naive, Myopic and Prospective control models. The discount factor $\lambda$ in Prospective control model was tuned using $K$-fold cross validation at the beginning of the testing and is a fixed valued, $0.12$, throughout the 14 week testing periods.
\begin{figure}[htbp]%
\centering
\subfloat[\# of approve]{%
\label{fig:n-app}%
\includegraphics[scale=0.5]{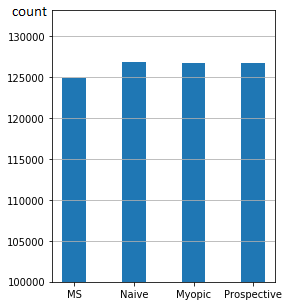}}%
\subfloat[\# of review]{%
\label{fig:n-rev}%
\includegraphics[scale=0.5]{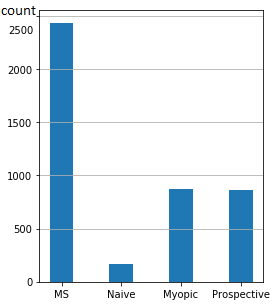}}%
\subfloat[\# of reject]{%
\label{fig:n-rej}%
\includegraphics[scale=0.5]{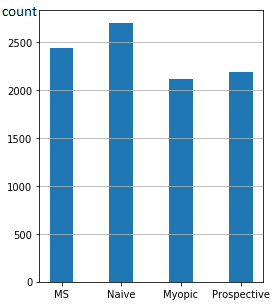}}%
\caption{Counts of Different Types of Risk Decisions Madevolume}\label{fig:n-decision}
\end{figure}

We first studied numbers of different risk control operations (approve, review, and reject) out of total testing transactions. Figure \ref{fig:n-decision} summarizes counts of different risk control decisions made by the current Microsoft's decision engine and three proposed dynamic control engines. Over the 14 weeks testing period, the dynamic control engines gradually captured the decision accuracy of the manual review group. All of the three models learned the fact that manual review agents overly rejected non-fraud transactions, and thus started to cut off the volume of transactions submitted to the manual review team. Figure \ref{fig:n-app} suggests that all three dynamic models approved more transactions than the current decision engine did. We can see later that these dynamic models also enhanced decision accuracy significantly in Figure \ref{fig:fn-fp-mr}: the dynamic control models not only approved more non-fraud transactions but also approved fewer fraud transactions. All three models suggest sending fewer transactions for manual review. Naive control aggressively decreased review volume to only 10\% of the review volume suggested by current Microsoft's decision engine, while Myopic and Prospective control mildly decreased review volume to roughly 30\% of the original volume. As for the decision of rejection, Naive control increased rejection volume by about 12\%, while Myopic and Prospective control decreased rejection volume by 12.5\% and 9\% respectively. We can also observe the fact that Myopic and Prospective control models again enhanced decision accuracy in Figure \ref{fig:fn-fp-mr} by rejecting much fewer non-fraud transactions but more fraud transactions.

Numbers of performance measures were used to validate the decision quality of a risk control engine. First, we investigated the decision quality by comparing the losses caused by wrong decisions. Two common performance metrics for this are false negative (FN) loss and false positive (FP) loss. FN loss measures the total loss of approving fraud transactions  (wrongly approval), which consists cost of goods and all related fees of chargeback. On the other hand, FP loss measures the total loss of rejecting non-fraud transactions (wrongly rejection), and it includes all the margins that should have been but not earned. We then checked the manual review (MR) cost, which is the total labor cost of the human review team. We found that when the risk engine submitted transactions that included fewer frauds (true negative: rightful approval) to the manual review teams,  manual review teams tended to have a much more difficult time to make accurate risk decisions since fraud patterns are less massive and recognizable. Therefore, with more transactions sent to manual review teams, not only more labor costs will arise, but the decision accuracy instability will likely to increase. Figure \ref{fig:fn-fp-mr} summarizes aggregated improvement on FN loss, FP loss and MR cost on the selected testing data set.
\begin{figure}[htbp]%
\centering
\subfloat[FN loss difference in \%]{%
\label{fig:fn}%
\includegraphics[scale=0.55]{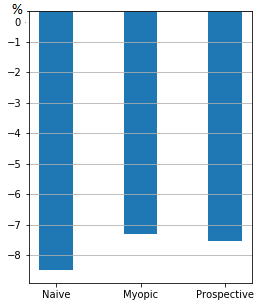}}%
\subfloat[FP loss difference in \%]{%
\label{fig:fp}%
\includegraphics[scale=0.55]{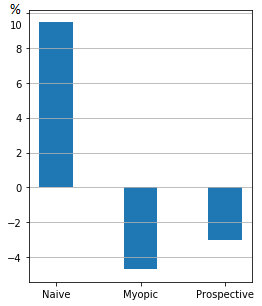}}%
\subfloat[MR cost difference in \%]{%
\label{fig:mr}%
\includegraphics[scale=0.55]{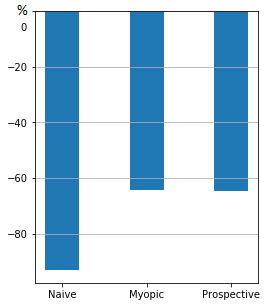}}%
\caption{Aggregated improvement on FN loss, FP loss and MR cost}\label{fig:fn-fp-mr}
\end{figure}
Figure \ref{fig:fn} shows the fact that all three dynamic control methods made better "approval" decisions by producing fewer FN losses. Naive, Myopic and Prospective control model decreases FN loss by 8.48\%, 7.32\%, and 7.55\% respectively. Figure \ref{fig:fp} suggests that Naive control model is relatively aggressive which rejected more non-fraud transactions and yielded 9.49\% more FP losses. Meanwhile, Myopic and Prospective control mildly decrease FP loss by 4.73\% and 3.05\% respectively, and these two dynamic control methods make more correct rejections. As mentioned earlier, all dynamic decision engine found that MR had limited accuracy in detecting fraud. In this way, Naive, Myopic and Prospective control model deceased transactions submit for review by 93.0\%, 64.2\%, and 64.7\%.

Second, we compare the differences of total profits and total chargeback rates among three dynamic control methods and current Microsoft's risk control method. Providing higher profit is the ultimate goal for business operations. While on the other hand, risk control team also needs to ensure the new dynamic control methods do not escalate the chargeback rate for merchants. We need to ensure that proposed dynamic control methods can produce higher profit but not increase (or even lower) the chargeback rate.\\
\begin{table}[htbp]
	\centering\caption{Aggregated performance improvements in profit and chargeback rate}
	\begin{tabular}{|c|c|c|c|}
		\hline
		& Naive & Myopic & Prospective \\ \hline
\begin{tabular}[c]{@{}c@{}}14 week aggregated\\ improvement on\\ testing set (\$)\end{tabular} & + \$ 79,962 & + \$ 97,863 & + \$ 96,693 \\ \hline
\begin{tabular}[c]{@{}c@{}}Estimated annual\\ improvement on\\ selected sub-department (\$)\end{tabular} & + \$ 9,900,071 & + \$ 12,116,318 & + \$ 11,971,568 \\ \hline
\begin{tabular}[c]{@{}c@{}}Relative chargeback\\ rate difference (\%)\end{tabular} & -0.72\% & -1.64\% & -2.98\% \\ \hline
\end{tabular}
	\label{tbl:profit-cb}
\end{table}\\
Table \ref{tbl:profit-cb} summarizes insights of improvements in overall profit and chargeback rate. The first row of Table \ref{tbl:profit-cb} includes profit improvement on the testing set calculated by
$(\text{TotalProfit}_{(\text{Dynamic})}-\text{TotalProfit}_{(\text{Microsoft})})$.
The second row extrapolates total profit from training set to an estimated annual improvement on the selected sub-unit. The third row reports the relative differences in proportion  on chargeback rates,  calculated by
\[\dfrac{\text{chargeback rate}_{\text{(Dynamic)}}-\text{chargeback rate}_{\text{(Microsoft)}}}{\text{chargeback rate}_{\text{(Microsoft)}}}. \]
Over the 14 week testing period, Naive control contributed  \$79,962 more on the testing portfolio while maintained a similar chargeback rate with current Microsoft risk decision engine had. Naive control decreased chargeback rate slightly by only 0.72\% of Microsoft's current chargeback rate. Myopic control contributed to the largest profit improvement for \$97,863 on the testing set. Meanwhile, Myopic control decreased chargeback rate relatively for 1.64\%. Finally, for Prospective control, it produced \$96,693 more profit on the testing transaction set, while provided the largest improvement on chargeback rate by decreasing chargeback rate by 2.98\%. The estimated annual improvements for Naive, Myopic and Prospective control on selected sub-unit were \$ 9,900,071, \$ 12,116,318, and \$ 11,971,568 respectively by extrapolation.

We conclude this section with a few business takeaways. We have seen that all three models have potentials for significantly improving company profit while slightly decreasing chargeback rates. All three dynamic models enhanced decision qualities by decreasing FN losses, FP losses and MR costs. Although Naive control model performed relatively aggressive in rejecting transactions, Myopic and Prospective control made better rejection decisions by rejecting fewer non-fraud transactions. All three dynamic methods had great performance with approving more non-fraud transactions and rejecting more fraud transactions. Artificial intelligence modules in these dynamic control models were well developed, and outperformed human review agents one most of the fraud decisions. Manual review volumes decreased as expected, and hence MR labor costs were reduced significantly.

\section{Conclusion and Future Study}\label{sec:conclusion}
To minimize ad hoc human-made decision, and improve the accuracy and robustness of the risk decision making, we investigated how to reach the optimal action when if complete information is available. We defined our problem rigorously, characterized all profit related components in the current system and investigated decision interactions between three different decision-making parties.  We acknowledged the fact that perfect information is unavailable in reality and thus we designed three data-driven dynamic optimal control models, Naive control, Myopic control, and Prospective control. These control models are 100\% data-driven and self-trained/adapted in a real-time manner. As demonstrated, these dynamic control models helped increase the profit significantly by minimizing false negative loss, false positive loss, and manual review costs by employing incomplete information, including long-term and short-term mature and partially-mature data. Meanwhile, the proposed control models also slightly lowered chargeback rates as desired. The field test on sub-unit of Microsoft E-commerce suggested that the discriminative dynamic control models had better fraud detection performance than the current general score cut-off control.

The research proposed in this paper can contribute greatly to both theoretical and applied research on fraud detection for the systems that have problems with incomplete information and decision looping effect due to multiple decision parties.  Its application is not limited to financial risk systems, but can also be used for application and research in cyber-security, homeland security, contagion disease screens etc.. Our future research will include information sharing and information fusion. We will extend this current research to more complex and realistic settings, where information sources are shared at different levels among different risk control decision parties.

\section*{Acknowledgments and Funding Sources}
This research was supported by Microsoft, Redmond, WA. The authors are thankful to researchers and members from Microsoft Knowledge and Growth group for providing data and their knowledge of the system.

\bibliography{mybibfile}

\end{document}